\documentclass{article}
\PassOptionsToPackage{numbers,sort&compress}{natbib}
\usepackage[preprint]{neurips_2025}
\usepackage[utf8]{inputenc}
\DeclareUnicodeCharacter{FF08}{(}
\DeclareUnicodeCharacter{FF09}{)}
\usepackage[T1]{fontenc}
\usepackage{graphicx}
\usepackage{tikz}
\usepackage{pgfplots}
\pgfplotsset{compat=1.18}
\usepackage{float}
\usepackage{placeins}
\usepackage{xcolor}
\usepackage{fontawesome}
\usepackage{amsmath,amssymb,amsfonts}
\usepackage{amsthm}
\usepackage{booktabs}
\usepackage{fancyvrb}
\usepackage{fvextra}
\usetikzlibrary{shapes, arrows, positioning, shapes.callouts, arrows.meta}
\usetikzlibrary{calc}
\usepackage{capt-of}
\usepackage{needspace}
\usepackage{hyperref}

\usepackage[most]{tcolorbox}
\newtcolorbox{paperbox}[2][]{enhanced,breakable,
  colback=white,colframe=black!25,
  boxrule=0.6pt,arc=2pt,left=6pt,right=6pt,top=6pt,bottom=6pt,
  fonttitle=\bfseries,
  title={#2},#1}

\newcommand{\titlerunning}[1]{}
\newcommand{\authorrunning}[1]{}
\newcommand{\institute}[1]{}

\makeatletter

\makeatother

\theoremstyle{plain}\newtheorem{theorem}{Theorem}
\newtheorem{proposition}{Proposition}
\newtheorem{lemma}{Lemma}
\newtheorem{corollary}{Corollary}
\theoremstyle{definition}\newtheorem{definition}{Definition}

\begin{document}

\title{Similarity Field Theory: A Mathematical Framework for Intelligence}
\titlerunning{Similarity Field Theory}

\author{Kei-Sing Ng\\\texttt{max.ksng.contact@gmail.com}}
\authorrunning{K.-S. Ng}

\institute{}

\maketitle

\begin{abstract}
We posit that persisting and transforming similarity relations form the structural basis of any comprehensible dynamic system. This paper introduces \textbf{Similarity Field Theory}, a mathematical framework that formalizes the principles governing similarity values among entities and their evolution. We define: (1) a \emph{similarity field} $S: U \times U \to [0,1]$ over a universe of entities $U$, satisfying reflexivity $S(E,E)=1$ and treated as a \emph{directed relational field} (asymmetry and non-transitivity are allowed); (2) the evolution of a system through a sequence $Z_p=(X_p,\,S^{(p)})$ indexed by $p=0,1,2,\dots$; (3) concepts $K$ as entities that induce \textbf{fibers} $F_{\alpha}(K)=\{E\in U \mid S(E,K)\ge \alpha\}$, i.e., superlevel sets of the unary map $S_K(E):=S(E,K)$; and (4) a generative operator $G$ that produces new entities. Within this framework, we formalize a generative definition of intelligence: an operator $G$ is intelligent with respect to a concept $K$ if, given a system containing entities belonging to the fiber of $K$, it generates new entities that also belong to that fiber. Similarity Field Theory thus offers a foundational language for characterizing, comparing, and constructing intelligent systems. At a high level, this framework reframes intelligence and interpretability as geometric problems on similarity fields—preserving and composing level-set fibers—rather than purely statistical ones. We prove two theorems: (i) asymmetry blocks mutual inclusion; and (ii) stability implies either an anchor coordinate or asymptotic confinement to the target level (up to arbitrarily small tolerance). Together, these results constrain similarity-field evolution and motivate an interpretive lens that can be applied to large language models. As one implication of the present framework, AI systems may be aligned less to safety as such than to human-observable and human-interpretable conceptions of safety, which may not fully determine the underlying safety concept.
\end{abstract}

\section{Introduction}
We may regard the world as an evolving system. Within this system, similarity relations exist; without them, everything would collapse into undifferentiated chaos and the world we comprehend would cease to exist. Consider a cup before you: one second later it remains recognizably as a cup because certain similarities persist. If the cup falls and shatters, some similarities diminish while some others remain. These persisting and changing similarities jointly constitute the world we comprehend.

This paper formalizes this intuition by proposing a mathematical framework, \textbf{Similarity Field Theory}. Just as classical mechanics relies on calculus to analyze the continuous variation of physical quantities, Similarity Field Theory provides the essential constructs to analyze how intelligent systems preserve, modify, and generate similarity structures. We begin by establishing our foundational constructs—entities, concepts, and a similarity field. We then describe system evolution via a sequence $Z_p=(X_p,S^{(p)})$ and introduce a generative operator, culminating in a formal, operational definition of intelligence as the capacity to preserve conceptual structure under generation.

A core implication of the present framework is the working view that the world available to cognition is not given in wholly direct access, but is organized through subjective perspective, similarity relations, and stable relational regularities. On this view, what models learn from human data is not the concept in itself, but an operationalized form of the concept as mediated by human observation, judgment, and description. This implication is especially consequential for safety: AI systems may be aligned less to safety as such than to human understandings and descriptions of safety. What humans identify, describe, or institutionalize as ``safe'' need not fully determine the underlying safety concept.

\paragraph{Contributions.} To support rapid assessment by reviewers, we summarize core contributions.
\begin{itemize}
\item A first-principles framework that treats graded, possibly asymmetric similarity as the primitive substrate for cognition and intelligence, with fibers as superlevel sets that define conceptual membership.

\item A sequence-based evolution formalism, including a strengthened stability theorem that characterizes anchors versus eventual confinement and an incompatibility theorem showing how asymmetry forbids reciprocal inclusion under cross-referenced thresholds.

\item An operational, generative definition of intelligence in terms of fiber preservation and coverage/fidelity measures.

\item A framework for interpretability that views networks as compositions of calibrated similarity fields.
\end{itemize}

\section{Related Work}
Similarity Field Theory builds upon and extends several foundational areas. We position it as a first-principles framework that elevates similarity to a primitive substrate, complementing and unifying existing concepts rather than deriving from any single field.

\subsection{Fuzzy Set Theory: A Formal Analogy and an Essential Distinction}
A core structure in our framework—a grouping of entities based on their similarity to a given concept—is formally analogous to the `$\alpha$-cut' in Zadeh's Fuzzy Set Theory \cite{Zadeh1965}. This formal link provides a useful bridge to a mature field. However, this similarity is formal rather than foundational. The primary goal of Fuzzy Set Theory is to represent static knowledge under conditions of vagueness. In contrast, the central aim of Similarity Field Theory is to formulate a theory of intelligence based on the \emph{dynamics} of how conceptual structures evolve and are generated, which can reuse fuzzy constructs when convenient, without being reducible to them. Our fundamental primitive is the similarity relation itself, modeled as a directed field $S$. Our framework is therefore not an extension of Fuzzy Set Theory but a relation-centric dynamic similarity theory.

\subsection{Metric Spaces: Relaxed Axioms for Cognitive Modeling}
Our framework deliberately departs from the rigid axioms of metric spaces. The requirement for a distance function to be symmetric and to obey the triangle inequality, while essential for describing physical space, is often too restrictive for modeling the nuances of cognitive or conceptual space. In practice, machine learning routinely employs measures such as cosine similarity, KL divergence, Jensen--Shannon divergence, and general Bregman divergences; several of these are asymmetric and/or violate the triangle inequality, motivating a relaxation of metric axioms for conceptual geometry \cite{Salton1975,KL1951,Lin1991,EndresSchindelin2003,Bregman1967,Banerjee2005,Tversky1977}. For the following examples, we fix a single similarity notion $S$ over the shared universe $U$ (e.g., cosine in a common embedding space or a corpus-based associative measure); all pairwise comparisons refer to this same $S$. Consider \emph{lion} (A), \emph{domestic cat} (B), and \emph{household pet} (C): $S(A,B)$ is high (both are felids), $S(B,C)$ is high (cats are prototypical pets), yet $S(A,C)$ is much lower (lions are not household pets), yielding high $S(A,B)$ and $S(B,C)$ but comparatively low $S(A,C)$.

\subsection{Category Theory: A Language of Structure}
Category theory provides a language for abstract structures, describing objects and their connections (morphisms). In the standard view, the link between two objects is a discrete, binary matter: a morphism either exists or does not. While powerful, this is insufficient to capture the graded nature of conceptual relations. We use a similarity field $S$ taking values in $[0,1]$ to quantify the strength of belonging/connection; given a concept $K$, we express structure and invariants via threshold sets of $S(\cdot,K)$; generation is ``intelligent'' iff it preserves and expands these threshold sets along the evolution. We do not read $S$ as a discrete morphism between arbitrary objects; instead, we center on the concept $K$ and study $S(\cdot,K)$ and its fibers $F_{\alpha}(K)$, using strength rather than mere existence to characterize belonging and connection. More importantly, this framework targets any system organized by similarity---from perception and learning to machine models, social bargaining, and institutional design---rather than being confined to categorical settings or specialist uses of ``fiber''. Enriched categorical phrasing is only an explanatory lens; our theory and results hold even without adopting it \cite{Lawvere1973,Kelly1982}.

\subsection{A Theoretical Foundation for Model Interpretability}

\subsubsection{Current Explorations and the Need for a Theory}
The academic community has made explorations into model interpretability, developing techniques to trace information ``circuits'' \cite{Lindsey2025}, identify ``persona vectors'' in activation space \cite{Chen2025,Hernandez2022}, and perform ``representation engineering'' \cite{Zou2023,Turner2023,Rimsky2024}. These empirical works enhance our ability to inspect AI systems, but they often lack a unifying mathematical theory that explains how these observed structures emerge from first principles. The Similarity Field Theory framework is proposed not to replace these techniques, but to offer a foundational, geometric mathematical perspective: it casts models as calibrated similarity fields whose level-set fibers constitute concepts, thereby unifying and guiding interpretability studies at a deeper structural level beyond purely statistical views.

\subsubsection{Beyond a Purely Statistical View}
Viewing a neural network as a statistical learning machine is a dominant and effective practice; it excels at characterizing in-distribution input--output behavior but offers limited insight into the formation of internal conceptual structure and into out-of-distribution (OOD) phenomena. Similarity Field Theory complements this view from the angle of conceptual similarity: we regard a trained model as a complex system composed of nested similarity fields and restate interpretability as a task of conceptual deconstruction---identifying latent concepts formed inside the model and defining them via their fibers (the sets of inputs that elicit high similarity responses). Consequently, the research goal shifts from merely fitting external statistics to understanding and organizing these concept fibers, enabling structured judgments beyond the training distribution in accordance with conceptual similarity rather than being constrained by frequency statistics of the training distribution. For example, if the training data include only ``red circles'' and ``green triangles'' and never ``green circles,'' the model can still classify a green circle as a circle because it matches along the fiber of the concept ``circle,'' rather than relying on frequency co-occurrence.

\begin{paperbox}{Box 1 \,|\, Why fiber-based structure can generalize when joint-frequency MLE collapses}
Consider two attributes---color $C\in\{\mathrm{red},\mathrm{green}\}$ and shape $H\in\{\mathrm{circle},\mathrm{triangle}\}$---and entities of the form $(C,H)$. Suppose the training set contains only $(\mathrm{red},\mathrm{circle})$ and $(\mathrm{green},\mathrm{triangle})$. A joint-frequency maximum-likelihood estimate assigns
\[
\widehat P_{\mathrm{MLE}}(C=\mathrm{green},H=\mathrm{circle})=\frac{n(\mathrm{green},\mathrm{circle})}{N}=0,
\]
so any decision rule that relies on the joint MLE as ``support'' cannot endorse the unseen combination $(\mathrm{green},\mathrm{circle})$ without introducing extra structure.

In Similarity Field Theory (SFT), concepts are modeled as superlevel sets of a similarity field. Let $K$ denote the concept ``circle'', and define the fiber $F_\alpha(K)=\{E:\ S(E,K)\ge \alpha\}$. Let $X_K$ be a small exemplar set intended to lie in the fiber; in the simplest case, $X_K=\{x\}$ with $x=(\mathrm{red},\mathrm{circle})$. Define an exemplar-to-concept readout
\[
\hat S(E,K)=\max_{x\in X_K}\min\big\{S_K(E,x),\,S(x,K)\big\},
\]
where $S_K(\cdot,\cdot)$ denotes a $K$-aligned similarity probe (e.g., a shape-focused similarity).

\textbf{Lemma (one-step fiber extension).} If there exists $x\in X_K$ such that $S(x,K)\ge \alpha$ and $S_K(E,x)\ge \alpha$, then $\hat S(E,K)\ge \alpha$ and hence $E\in \hat F_\alpha(K)$.

Thus, even when $n(\mathrm{green},\mathrm{circle})=0$, the unseen entity $E=(\mathrm{green},\mathrm{circle})$ can be admitted into the ``circle'' fiber whenever it is sufficiently similar (in the $K$-aligned sense) to an exemplar already inside the fiber. By contrast, additive smoothing (e.g., Laplace/Dirichlet) redistributes mass to unseen cells without preferentially selecting $(\mathrm{green},\mathrm{circle})$ over other unobserved combinations unless additional factorization or concept-specific structure is imposed.
\end{paperbox}

\section{Foundational Constructs}

\begin{definition}[Entity]
An \emph{entity}, denoted $E$, is any identifiable element of discourse. An entity can be atomic, a concept, or itself a similarity relation.
\end{definition}

\begin{definition}[Universe]
The \emph{universe}, denoted $U$, is the set of all entities.
\end{definition}

\begin{definition}[Concept]
A \emph{concept}, denoted $K$, is an entity that serves as a representation for a class of other entities. It induces a unary map $S_K(E):=S(E,K)$ and thereby defines fibers as its superlevel sets.
\end{definition}

\begin{definition}[Similarity Field]
A \emph{similarity field} is a mapping $S: U \times U \to [0,1]$ satisfying reflexivity: $S(E,E) = 1$ for all $E \in U$. We treat $S$ as a \emph{directed relational field}; symmetry and transitivity are not assumed.
\end{definition}

\begin{definition}[Fiber]
Given a concept $K \in U$ and a threshold $\alpha \in [0,1]$, the \emph{fiber} induced by $K$ is the superlevel set:
\[
F_{\alpha}(K)=\{E \in U \mid S(E, K)\ge \alpha\}.
\]
\end{definition}

\noindent\textbf{Structural assumptions (minimal).} We work with the weakest topological or graph structures needed: a domain of observables $D\subseteq[0,1]^n$ admits well-defined limits and closures (e.g., by restricting to compact subsets), so that sequence limits, $\omega$-limit sets, and level sets are well-defined. Smoothness is assumed only when explicitly required.

\noindent\textbf{Terminology clarification.} Throughout, ``field'' means a value-assigning function in the geometric sense; it is neither an algebraic field nor a physical field theory.

\subsection{On the Nature of the Similarity Field}
It is a crucial feature of our framework that the similarity field $S$ is defined broadly, requiring only reflexivity. We deliberately do not enforce two stronger conditions common in classical mathematics:
\begin{itemize}
\item \textbf{Symmetry:} We do not assume that $S(E_1,E_2)=S(E_2,E_1)$. This allows the framework to model asymmetric cognitive phenomena in reality. Classic work in cognitive psychology has shown that people often judge, for example, ``North Korea is similar to China'' as more natural, and endorse it more frequently, than the reversed statement ``China is similar to North Korea'', even though the underlying pair of countries is the same; such reference-point and salience effects systematically break symmetry in human similarity judgments \cite{Tversky1977}.
\item \textbf{Transitivity:} High similarity need not be transitive. In conceptual space, consider \emph{lion} (A), \emph{domestic cat} (B), and \emph{household pet} (C): $S(A,B)$ is high (both are felids), $S(B,C)$ is high (domestic cats are prototypical pets), yet $S(A,C)$ is much lower (lions are not household pets), yielding high $S(A,B)$ and $S(B,C)$ but comparatively low $S(A,C)$. This pattern departs from metric triangle-inequality assumptions and matches empirical observations about graded category structure and conceptual similarity \cite{Tversky1977,Gardenfors2000}.
\end{itemize}
This deliberate relaxation is motivated by the arguments presented in the main text, distinguishing Similarity Field Theory from frameworks based on Metric Spaces or strict Equivalence Relations. While a field satisfying all three properties is a valid special case, our broader definition is essential for modeling the continuous and overlapping nature of conceptual structures in the real world.

\subsection{Properties of Similarity Fields}
The properties of the similarity field, particularly its allowance for asymmetry, give rise to non-trivial principles governing conceptual membership. We formalize some of these properties below.

\begin{proposition}[Fiber monotonicity]\label{prop:fiber-monotone}
Fix $K\in U$. If $\alpha\ge\beta$, then $F_\alpha(K)\subseteq F_\beta(K)$.
\end{proposition}
\begin{proof}
If $E\in F_\alpha(K)$, then $S(E,K)\ge\alpha\ge\beta$, hence $E\in F_\beta(K)$.
\end{proof}

\begin{proposition}[Intersection identity]\label{prop:fiber-intersection}
For fixed $K\in U$ and thresholds $\{\alpha_r\}_{r=1}^m$, we have $\bigcap_{r=1}^m F_{\alpha_r}(K)=F_{\max_r\alpha_r}(K)$.
\end{proposition}
\begin{proof}
If $E\in\bigcap_{r=1}^m F_{\alpha_r}(K)$, then $S(E,K)\ge\alpha_r$ for all $r$, so $S(E,K)\ge\max_r\alpha_r$ and $E\in F_{\max_r\alpha_r}(K)$. The reverse inclusion is immediate.
\end{proof}

\begin{proposition}
If $S$ is a similarity field, $d = 1 - S$ is not.
\end{proposition}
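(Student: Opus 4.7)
The plan is to show that the candidate $d=1-S$ fails the reflexivity axiom that is built into the definition of a similarity field. First I would note that the codomain is not the issue: since $S$ takes values in $[0,1]$, so does $d=1-S$, hence $d:U\times U\to[0,1]$ is a well-defined map. So any failure must occur at the reflexivity condition $T(E,E)=1$.

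Next I would simply evaluate $d$ on the diagonal. For any $E\in U$, applying the reflexivity of $S$ gives $d(E,E)=1-S(E,E)=1-1=0$. Since $0\neq 1$, the reflexivity axiom required of a similarity field fails at every $E\in U$. Assuming the universe is nonempty (which is tacit throughout the framework, since reflexivity is quantified over $U$), this is enough to conclude that $d$ is not a similarity field.

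The only subtlety worth flagging is the edge case $U=\varnothing$, where reflexivity holds vacuously and $d$ would trivially qualify; I would dismiss this by invoking the standing assumption that $U$ contains entities. There is essentially no obstacle here: the statement is a one-line consequence of the reflexivity axiom, and the point of including it is conceptual rather than technical, emphasizing that ``similarity'' and its complement are not interchangeable in this framework and that converting similarity to dissimilarity requires a different construction (e.g., $d'(E,F)=1-S(E,F)$ only off the diagonal, or a normalization that restores $d'(E,E)=1$ under a dual convention).
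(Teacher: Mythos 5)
Your proof is correct and takes essentially the same route as the paper: evaluate $d$ on the diagonal, use reflexivity of $S$ to obtain $d(E,E)=0$, and note this contradicts the required $d(E,E)=1$. The extra remarks on the codomain and the vacuous case $U=\varnothing$ are sound but beyond what the paper includes.
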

\begin{proof}
For $d$ to be a similarity field, it must satisfy $d(E,E) = 1$. By definition, $d(E,E) = 1 - S(E,E) = 1 - 1 = 0$. This implies the contradiction $0 = 1$.
\end{proof}

\begin{proposition}[Closure under Multiplication]
If $S_1, S_2$ are similarity fields, so is their pointwise product $S_1 \cdot S_2$.
\end{proposition}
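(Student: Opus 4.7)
The plan is to verify the two defining conditions of a similarity field from the definition given earlier: (i) the codomain condition, namely that the map lands in $[0,1]$, and (ii) reflexivity, namely that the value on the diagonal is $1$. Since both are pointwise properties and the product operation is pointwise, both should fall out immediately from the corresponding properties of $S_1$ and $S_2$.

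First I would introduce the pointwise product explicitly as the map $(S_1\cdot S_2): U\times U \to \mathbb{R}$ defined by $(S_1\cdot S_2)(E_1,E_2) := S_1(E_1,E_2)\, S_2(E_1,E_2)$. For the codomain, I would observe that for every pair $(E_1,E_2)\in U\times U$ we have $S_1(E_1,E_2),S_2(E_1,E_2)\in[0,1]$, and since $[0,1]$ is closed under multiplication (the product of two numbers in $[0,1]$ is nonnegative and bounded above by $1$), the value $(S_1\cdot S_2)(E_1,E_2)$ lies in $[0,1]$.

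For reflexivity, I would evaluate on the diagonal: $(S_1\cdot S_2)(E,E) = S_1(E,E)\, S_2(E,E) = 1\cdot 1 = 1$, using the reflexivity axiom of each factor. Combining these two verifications yields that $S_1\cdot S_2$ satisfies the full definition of a similarity field.

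I do not expect any genuine obstacle: the proposition is essentially a sanity check that the similarity-field axioms (reflexivity plus $[0,1]$-valuedness, with no symmetry or transitivity required) are preserved under pointwise products. The only mild subtlety worth remarking on in the write-up is that symmetry and transitivity, even if they held for $S_1$ and $S_2$, would also be preserved, but they are not needed; the argument uses only reflexivity and the interval-closure of multiplication on $[0,1]$.
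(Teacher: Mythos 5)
Your proof is correct and matches the paper's argument: both verify the $[0,1]$-codomain via closure of $[0,1]$ under multiplication and then check reflexivity on the diagonal using $1\cdot 1 = 1$. You simply spell out the steps in a bit more detail than the paper's terse two-line proof.
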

\begin{proof}
The range remains $[0,1]$. For reflexivity, $(S_1 \cdot S_2)(E,E) = S_1(E,E) \cdot S_2(E,E) = 1 \cdot 1 = 1$.
\end{proof}

\begin{proposition}[Closure under Convex Combination]
If $\{S_k\}$ is a collection of similarity fields and $\{w_k\}$ are non-negative weights summing to 1, then $\sum w_k S_k$ is a similarity field.
\end{proposition}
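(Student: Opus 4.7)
The plan is to verify directly the two defining properties of a similarity field for the combination $T := \sum_k w_k S_k$: namely, that $T$ maps $U\times U$ into $[0,1]$, and that $T$ is reflexive. Since the hypothesis already pins down the weights (nonnegative, summing to one) and each $S_k$ takes values in $[0,1]$ with $S_k(E,E)=1$, no structural argument beyond arithmetic is required.

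First I would handle the range. For any pair $(E_1,E_2)\in U\times U$, each term $w_k S_k(E_1,E_2)$ is nonnegative since $w_k\ge 0$ and $S_k(E_1,E_2)\ge 0$, so $T(E_1,E_2)\ge 0$. For the upper bound, $S_k(E_1,E_2)\le 1$ gives $w_k S_k(E_1,E_2)\le w_k$, and summing over $k$ yields $T(E_1,E_2)\le \sum_k w_k = 1$. Thus $T\colon U\times U\to[0,1]$.

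Next I would verify reflexivity. Using $S_k(E,E)=1$ for every $k$, we compute $T(E,E)=\sum_k w_k\cdot 1=\sum_k w_k=1$ for every $E\in U$.

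Since neither symmetry nor transitivity is required by the definition of a similarity field, these are the only checks needed, and $T$ qualifies. There is no real obstacle here; the only item worth flagging is the tacit assumption that the index set is such that $\sum_k w_k$ is well defined and equals $1$ (finite, or countably infinite with absolute convergence), which the hypothesis already guarantees, so the same term-by-term bounds continue to apply.
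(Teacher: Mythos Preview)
Your proof is correct and follows the same approach as the paper's own proof: check that the range stays in $[0,1]$ and verify reflexivity via $\sum_k w_k\cdot 1 = 1$. You are simply more explicit about the lower and upper range bounds than the paper, which compresses the range check into a single assertion.
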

\begin{proof}
The range remains $[0,1]$. For reflexivity, $(\sum w_k S_k)(E,E) = \sum w_k S_k(E,E) = \sum w_k \cdot 1 = 1$.
\end{proof}

\begin{theorem}[Incompatibility Theorem]\label{thm:asymmetric}
Given two distinct entities $E_1, E_2 \in U$. Let their similarity values be $x = S(E_1, E_2)$ and $y = S(E_2, E_1)$, produced by the single similarity field $S$. If $x \neq y$ (i.e., the similarity relation is asymmetric), then the following two conditions cannot both be true simultaneously:
\begin{enumerate}
    \item $E_1$ belongs to the fiber induced by $E_2$ as a concept, with threshold $y$ (i.e., $E_1 \in F_y(E_2)$).
    \item $E_2$ belongs to the fiber induced by $E_1$ as a concept, with threshold $x$ (i.e., $E_2 \in F_x(E_1)$).
\end{enumerate}
\end{theorem}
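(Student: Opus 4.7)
The plan is to unfold each fibre membership condition using the definition of $F_\alpha(K)$ and observe that the two conditions together force equality of the two similarity values, contradicting the asymmetry hypothesis. Since everything hinges on a straightforward application of the definition of a fibre, no auxiliary construction is needed.

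Concretely, I would first translate condition (1), $E_1 \in F_y(E_2)$, into the inequality $S(E_1,E_2) \ge y$, which by the notation $x = S(E_1,E_2)$ becomes $x \ge y$. I would then translate condition (2), $E_2 \in F_x(E_1)$, into $S(E_2,E_1) \ge x$, i.e., $y \ge x$. Assuming both conditions hold, chaining the two inequalities gives $x \ge y \ge x$, hence $x = y$. This directly contradicts the hypothesis $x \neq y$, so the two conditions cannot hold simultaneously.

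The only subtlety worth flagging explicitly is that both thresholds are \emph{self-set} by the asymmetric pair itself: entity $E_2$ is tested against $E_1$'s fibre at threshold $x = S(E_1,E_2)$, and symmetrically for $E_1$. This is what makes the mutual inclusion collapse to the equality $x=y$; were the thresholds chosen independently of $x$ and $y$, the argument would not go through. I would make sure the statement of the proof makes this tight coupling visible so the reader sees why ``reciprocal inclusion'' is precisely the configuration ruled out.

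There is essentially no technical obstacle: the proof is a two-line contrapositive argument from the definition of a fibre. The main care goes into clean presentation---keeping the symbols $x$, $y$ aligned with the two directional similarity values and not accidentally invoking any unstated symmetry of $S$---rather than into any nontrivial reasoning step.
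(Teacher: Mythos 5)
Your proposal is correct and matches the paper's proof exactly: both unfold the two fibre memberships into $x\ge y$ and $y\ge x$, conclude $x=y$, and contradict the asymmetry hypothesis. The remark about the thresholds being self-set is a nice observation but does not alter the argument.
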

\begin{proof}
We use proof by contradiction. Assume that both condition 1 and condition 2 are simultaneously true.
\begin{itemize}
    \item From condition 1 being true, by the definition of a fiber $F_y(E_2) = \{E \in U \mid S(E, E_2) \ge y\}$, we must have $S(E_1, E_2) \ge y$. Substituting $x = S(E_1, E_2)$, we get $x \ge y$.
    \item From condition 2 being true, by the definition of a fiber $F_x(E_1) = \{E \in U \mid S(E, E_1) \ge x\}$, we must have $S(E_2, E_1) \ge x$. Substituting $y = S(E_2, E_1)$, we get $y \ge x$.
\end{itemize}
Combining these two results, we have $x \ge y$ and $y \ge x$, which necessarily implies $x = y$. However, this contradicts the theorem's premise that $x \neq y$.
Therefore, our initial assumption is false, meaning that condition 1 and condition 2 cannot both be true simultaneously.
\end{proof}

\noindent\textbf{Significance of Theorem \ref{thm:asymmetric}:} The key point is that asymmetry induces a one-sided constraint: under cross-referenced thresholds, reciprocal inclusion becomes impossible unless the similarity relation is symmetric. This highlights a structural constraint: asymmetry can impose inherent limitations and a forced imbalance on conceptual membership. \textbf{It establishes the impossibility of reciprocal inclusion: if $E_1$'s perspective on $E_2$ differs from $E_2$'s perspective on $E_1$ (i.e., $x \neq y$), then they cannot simultaneously be members of each other's conceptual fibers when the threshold for inclusion is set by the other's standard.} In a stylized negotiation setting where each party sets its acceptance threshold using the other party’s valuation, the theorem yields a simple infeasibility condition: if the assessments are asymmetric, mutual agreement cannot be achieved under these cross-referenced thresholds unless at least one party adjusts its acceptance criterion, for example by slightly lowering a price (or raising an offer), which effectively shifts the fiber threshold and may, when the parties are near the boundary, turn non-membership into membership.

This principle is illustrated by the Novice ($N$) and Expert ($E$) scenario. Suppose $S(N, E) = 0.9$ and $S(E, N) = 0.2$. The Novice may belong to the Expert's fiber ($N \in F_{0.2}(E)$ since $0.9 \ge 0.2$), but the theorem guarantees the impossibility of the reverse: the Expert cannot belong to the Novice's fiber at the Novice's standard ($E \notin F_{0.9}(N)$ since $0.2 < 0.9$). The asymmetry in similarity mathematically forbids a symmetric sense of belonging.

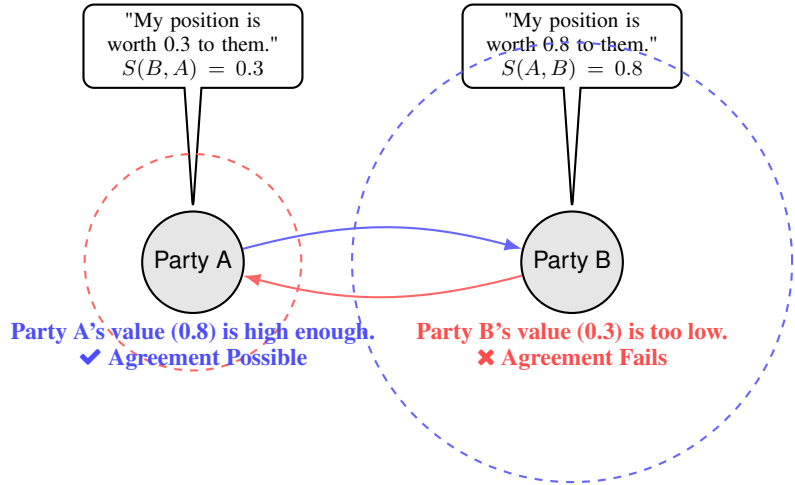
\begin{figure}[H]
\centering
\begin{tikzpicture}[
    scale=0.9,
    every node/.style={transform shape},
    font=\sffamily,
    person/.style={circle, draw, thick, fill=gray!20, minimum size=1.5cm},
    fiber/.style={circle, draw, dashed, thick},
    arrow/.style={-Latex, thick},
    judgement/.style={align=center, font=\bfseries}
]
\node[person] (A) at (-2.8, 0) {Party A};
\node[person] (B) at (2.8, 0) {Party B};
\draw[arrow, blue!60, bend left=15] (A) to (B);
\draw[arrow, red!60, bend left=15] (B) to (A);
\node[rectangle callout, callout absolute pointer={(A.north)}, rounded corners, draw, thick, fill=white, text width=3cm, align=center, font=\small, anchor=south] at ($(A.north)+(0,1.8)$) {"My position is worth 0.3 to them." \\ \bfseries $S(B, A) = 0.3$};
\node[rectangle callout, callout absolute pointer={(B.north)}, rounded corners, draw, thick, fill=white, text width=3cm, align=center, font=\small, anchor=south] at ($(B.north)+(0,1.8)$) {"My position is worth 0.8 to them." \\ \bfseries $S(A, B) = 0.8$};
\node[fiber, blue!60, minimum size=6.5cm] (F_B) at (B.center) {};
\node[fiber, red!60, minimum size=3.2cm] (F_A) at (A.center) {};
\node[judgement, text=blue!70, anchor=north] (judgeB) at (A.south) {Party A's value (0.8) is high enough.\\ \faCheck\ Agreement Possible};
\node[judgement, text=red!70, anchor=north] (judgeA) at (B.south) {Party B's value (0.3) is too low.\\ \faTimes\ Agreement Fails};
\end{tikzpicture}
\caption{The Incompatibility Theorem visualized through a negotiation. Each party's "acceptance fiber" is defined by the other's perceived value. Party B's offer ($S(B,A)=0.3$) is too low to enter Party A's acceptance fiber (threshold 0.8). Conversely, Party A's offer ($S(A,B)=0.8$) easily clears Party B's threshold (0.3). Because the condition is not met for both, mutual agreement is impossible.}
\label{fig:incompatibility_compact}
\end{figure}
\FloatBarrier

\section{System Evolution via Sequences}

\noindent\textbf{Sequence-only stance.} We do not introduce flow maps; all notions of evolution, stability, and $\omega$-limits are expressed purely in terms of sequences.

\begin{definition}[System-State Sequence]
A \emph{system-state sequence} is a sequence $Z_p=(X_p,S^{(p)})$ with $p=0,1,2,\dots$, where $X_p\subseteq U$ is a finite set of entities and $S^{(p)}:U\times U\to[0,1]$ is a similarity field at step $p$.
\end{definition}

\noindent\textbf{Index interpretation.} The index $p$ may denote discrete time, training steps\slash epochs, rounds of interaction, or any other well-ordered design parameter; our results depend only on its ordering, not on its physical interpretation. Even if an underlying continuous time exists, empirical measurements are effectively discrete at finite resolution; accordingly, we adopt a sequence-based formalism while remaining compatible with continuous-time models where available.

The evolution of a system-state sequence can be characterized by two primary modes:
\begin{enumerate}
    \item \textbf{Field Variation:} For a fixed set of entities, their relational structure changes along the sequence via $S^{(p)}(E_i,E_j)$. When $p$ samples a continuous underlying parameter, differentiability of $S^{(p)}$ may be assumed only when needed.
    \item \textbf{Discrete Composition Change:} The set $X_p$ evolves via the addition/removal/modification of entities. This mode is driven by a generative operator (defined later).
\end{enumerate}

\noindent\textbf{Eventual confinement and absorbing sets.} A subset $\mathcal A\subseteq U$ is called an absorbing set for a given evolution sequence if there exists $p^\star$ such that for all $p\ge p^\star$, the entities under consideration lie in $\mathcal A$ and never leave it along the subsequent sequence. When $\mathcal A$ is a fiber $F_\alpha(K)$, we may say the sequence exhibits eventual confinement inside the fiber of concept $K$.

\subsection{The Principle of Foundational Stability}

\paragraph{Background and Core Significance.}
This theorem establishes, with mathematical rigor, a foundational principle of cognition that scales from the individual mind to the macro-society: any coherent, stable cognitive system is mathematically dependent on at least one long-term, foundational belief or concept that serves as a source of stability. Without a stability source, collapse is inevitable. In any system composed of interconnected elements, the stability of one property necessarily depends on the stability of its determinants.

\paragraph{Topological convention.}
We fix an embedding $D\subseteq[0,1]^n$ and endow $D$ with its subspace topology. Unless otherwise stated, closures $\overline{A}$ appearing in $C_c^\star$ are taken in the ambient space $[0,1]^n$. The $\omega$-limit set collects cluster points in the ambient closure $\overline D$ \cite{HSD2012}.

\paragraph{Set-up.}
Let $f:D\to[0,1]$ be non-constant. For each index $p$, define the determining vector
\[
\mathbf v_p=\bigl(S^{(p)}(E_1,E_1'),\ldots,S^{(p)}(E_n,E_n')\bigr)\in D,\qquad y_p:=f(\mathbf v_p).
\]
A sequence $y_p$ is called stable if there exists $c\in[0,1]$ such that for every $\epsilon>0$ there is $p^\ast$ with $|y_p-c|\le\epsilon$ for all $p\ge p^\ast$. For $c\in[0,1]$ and $\epsilon>0$, define the level-set tube $N_\epsilon(c):=\{x\in D:\ |f(x)-c|\le\epsilon\}$ and the closed level set
\[
C_c^\star:=\bigcap_{m=1}^\infty \overline{\{x\in D:\ |f(x)-c|\le 1/m\}}.
\]
The $\omega$-limit set is $\omega(\mathbf v):=\{x\in \overline D:\ \exists\ p_k\to\infty\ \text{with}\ \mathbf v_{p_k}\to x\}$. A coordinate sequence $\pi_i(\mathbf v_p)$ is called an anchor if it converges.

\begin{theorem}[Stability Theorem, strengthened]\label{thm:stability}
Assume $y_p=f(\mathbf v_p)$ is stable with limit $c$. Then:
\begin{enumerate}
\item For every $\epsilon>0$ there exists $p_\epsilon$ such that $\mathbf v_p\in N_\epsilon(c)$ for all $p\ge p_\epsilon$. Hence $\omega(\mathbf v)\subseteq C_c^\star$.
\item If $f$ is continuous, then $\omega(\mathbf v)\subseteq f^{-1}(\{c\})$.
\item Either some coordinate $\pi_i(\mathbf v_p)$ converges, or $\omega(\mathbf v)$ contains at least two distinct points and is contained in $C_c^\star$.
\end{enumerate}
\end{theorem}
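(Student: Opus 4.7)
The plan is to unpack each of the three items directly from the definitions, using Bolzano--Weierstrass in the compact cube $[0,1]^n$ and the fact that vector convergence in $\mathbb R^n$ is equivalent to coordinatewise convergence. No deep machinery is needed; the work lies in carefully translating between the sequence $y_p$, the determining vectors $\mathbf v_p$, and the closed level-set tube $C_c^\star$.

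For item 1, I would argue it is essentially a rewording of stability. By hypothesis, for each $\epsilon>0$ there exists $p^\ast$ such that $|y_p-c|\le\epsilon$ for all $p\ge p^\ast$; since $y_p=f(\mathbf v_p)$, this is exactly $\mathbf v_p\in N_\epsilon(c)$, so I can set $p_\epsilon:=p^\ast$. For the inclusion $\omega(\mathbf v)\subseteq C_c^\star$, pick $x\in\omega(\mathbf v)$ and a subsequence $\mathbf v_{p_k}\to x$. For each fixed $m\ge 1$, item 1 gives $\mathbf v_{p_k}\in N_{1/m}(c)$ for all large $k$, so $x$ is a limit point of $\{y\in D:\,|f(y)-c|\le 1/m\}$ and hence lies in its ambient closure. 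Intersecting over $m$ yields $x\in C_c^\star$.

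For item 2, assuming $f$ is continuous on $D$ (and noting the structural convention that we restrict to compact subsets, so $\omega(\mathbf v)\subseteq\overline D=D$ with $f$ extended continuously if needed), I take any $x\in\omega(\mathbf v)$ with $\mathbf v_{p_k}\to x$. Continuity gives $f(\mathbf v_{p_k})\to f(x)$, while stability gives $y_{p_k}=f(\mathbf v_{p_k})\to c$. Uniqueness of limits forces $f(x)=c$, i.e.\ $x\in f^{-1}(\{c\})$.

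For item 3, I would set up a dichotomy on coordinatewise behavior. If some $\pi_i(\mathbf v_p)$ converges, the first clause holds and we are done. Otherwise, fix any coordinate $i$; as a bounded real sequence in $[0,1]$, the failure of $\pi_i(\mathbf v_p)$ to converge forces the existence of two distinct cluster values $a\neq b$. Select subsequences $(p_k),(q_k)$ with $\pi_i(\mathbf v_{p_k})\to a$ and $\pi_i(\mathbf v_{q_k})\to b$, then apply Bolzano--Weierstrass in the compact cube $[0,1]^n$ to pass to further subsequences along which $\mathbf v_{p_k}\to x$ and $\mathbf v_{q_k}\to x'$ in $\overline D$. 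Continuity of the projection gives $\pi_i(x)=a\neq b=\pi_i(x')$, so $x\neq x'$ are two distinct points of $\omega(\mathbf v)$; the inclusion $\omega(\mathbf v)\subseteq C_c^\star$ is inherited from item 1. The only mild subtlety I anticipate is the ambient-versus-subspace distinction for closures in the definition of $C_c^\star$ and $\omega(\mathbf v)$, which the paper's topological convention on $D\subseteq[0,1]^n$ already resolves; everything else is bookkeeping.
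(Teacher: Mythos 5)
Your proposal is correct and follows essentially the same approach as the paper for items 1 and 2. For item 3, you argue the dichotomy directly: from a non-convergent coordinate you extract two distinct cluster values $a\neq b$, then use Bolzano--Weierstrass to lift them to two distinct cluster points $x\neq x'$ of the full vector sequence. The paper instead packages the same compactness fact as a standalone lemma (``a unique cluster point in a compact space implies convergence, hence coordinatewise convergence'') and invokes its contrapositive. These are logically equivalent formulations of the same Bolzano--Weierstrass argument; your version is a bit more constructive (it exhibits the two cluster points), the paper's is a cleaner reusable statement. Both correctly inherit $\omega(\mathbf v)\subseteq C_c^\star$ from item 1, and both rely on the same implicit convention that $D$ is closed (so that $\omega(\mathbf v)\subseteq D$ and $f^{-1}(\{c\})$ is meaningful in item 2), which you flag explicitly.
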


\begin{lemma}[Separation Criterion]
Let $A,B\subseteq D$ and $\delta>0$ be such that $\mathbf v_p$ visits $A$ and $B$ infinitely often, and $|f(a)-f(b)|\ge\delta$ for all $a\in A$, $b\in B$. Then $y_p=f(\mathbf v_p)$ does not converge.
\end{lemma}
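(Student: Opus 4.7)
The plan is to argue by contradiction: assume $y_p$ does converge to some limit $c\in[0,1]$, and then use the infinite-visits hypothesis together with the separation gap $\delta$ to derive a contradiction via the triangle inequality.

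First I would fix $\epsilon=\delta/3$ and invoke the definition of stability to obtain an index $p^\ast$ such that $|y_p-c|\le\delta/3$ for every $p\ge p^\ast$. Since the sequence $\mathbf v_p$ visits both $A$ and $B$ infinitely often, I can then choose indices $p_A,p_B\ge p^\ast$ with $\mathbf v_{p_A}\in A$ and $\mathbf v_{p_B}\in B$. Writing $a=\mathbf v_{p_A}$ and $b=\mathbf v_{p_B}$, the separation hypothesis gives $|f(a)-f(b)|\ge\delta$, while the stability bound gives both $|f(a)-c|\le\delta/3$ and $|f(b)-c|\le\delta/3$.

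Combining these via the triangle inequality yields
\[
\delta\ \le\ |f(a)-f(b)|\ \le\ |f(a)-c|+|c-f(b)|\ \le\ \tfrac{2\delta}{3},
\]
which is impossible since $\delta>0$. This contradiction forces $y_p$ not to converge, which is the desired conclusion.

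There is essentially no difficult step here: the argument is a direct $\epsilon$-$\delta$ separation, and the only subtlety is to note that ``visits infinitely often'' is precisely what lets us pick the two witness indices \emph{after} the stability threshold $p^\ast$, which is what makes the triangle inequality applicable. No continuity, compactness, or smoothness of $f$ is used, so the lemma applies in the minimal setting assumed in the paper and can be cited cleanly to power part (3) of Theorem \ref{thm:stability} by taking $A$ and $B$ to be disjoint neighborhoods separating two candidate $\omega$-limit points with distinct $f$-values.
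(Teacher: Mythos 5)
Your proof is correct and takes essentially the same route as the paper: both exploit the fact that infinitely-often visits to $\delta$-separated sets $A$ and $B$ prevent $y_p$ from settling. You unpack the argument as an explicit $\epsilon=\delta/3$ contradiction via the triangle inequality, whereas the paper phrases the identical observation more tersely as $\limsup y_p-\liminf y_p\ge\delta$; your closing remark about picking the witness indices after $p^\ast$ and about not needing continuity of $f$ is exactly the relevant subtlety.
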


\begin{proof}
Choose subsequences along visits to $A$ and to $B$. Their images under $f$ differ by at least $\delta$, so $\limsup y_p-\liminf y_p\ge\delta>0$, which rules out convergence.
\end{proof}

\begin{lemma}\label{lem:unique-cluster}
If $\{\mathbf v_p\}$ has a unique cluster point $x^\star$ in the ambient compact space $[0,1]^n$ (hence also in $\overline D$), then $\mathbf v_p\to x^\star$, so each coordinate $\pi_i(\mathbf v_p)$ converges. Consequently, if no coordinate converges, then the set of ambient cluster points contains at least two distinct points.
\end{lemma}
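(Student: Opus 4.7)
The plan is to exploit sequential compactness of the ambient cube $[0,1]^n$, which contains $D$ and hence all of the iterates $\mathbf v_p$, and then argue by contradiction that a unique cluster point forces convergence of the whole sequence. Coordinate convergence will then follow from continuity of the projection maps.

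Concretely, I would first suppose, for contradiction, that $\mathbf v_p \not\to x^\star$ in the ambient Euclidean metric. Then there exist $\epsilon_0>0$ and a subsequence $\{\mathbf v_{p_k}\}$ with $\|\mathbf v_{p_k}-x^\star\|\ge\epsilon_0$ for all $k$. Since $[0,1]^n$ is sequentially compact, this subsequence admits a further convergent sub-subsequence $\mathbf v_{p_{k_j}}\to y^\star\in[0,1]^n$. Continuity of the norm gives $\|y^\star-x^\star\|\ge\epsilon_0>0$, so $y^\star\ne x^\star$. But $y^\star$ is by construction a cluster point of $\{\mathbf v_p\}$ in the ambient compact space (and hence lies in $\overline D$), contradicting the assumption that $x^\star$ is the unique ambient cluster point. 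Therefore $\mathbf v_p\to x^\star$, and since each projection $\pi_i:[0,1]^n\to[0,1]$ is continuous, $\pi_i(\mathbf v_p)\to \pi_i(x^\star)$ for every $i$.

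For the ``Consequently'' clause I would argue by contraposition. Sequential compactness of $[0,1]^n$ guarantees that the set of ambient cluster points of $\{\mathbf v_p\}$ is nonempty. If this set were a singleton $\{x^\star\}$, the first part of the lemma would yield $\mathbf v_p\to x^\star$, and hence every coordinate $\pi_i(\mathbf v_p)$ would converge, contradicting the hypothesis that no coordinate converges. So the cluster set must contain at least two distinct points.

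There is no real obstacle here; the proof is a standard compactness argument. The only small points requiring care are (i) keeping the distinction between the (possibly non-closed) working domain $D$ and the ambient compact cube $[0,1]^n$ straight, which is harmless because the excerpt explicitly collects $\omega$-limits in $\overline D\subseteq[0,1]^n$, and (ii) noting that convergence in the product topology of $[0,1]^n$ is precisely coordinate-wise convergence, so no separate argument beyond continuity of the projections is needed.
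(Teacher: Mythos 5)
Your proposal is correct and follows essentially the same argument as the paper: assume non-convergence, extract a subsequence bounded away from $x^\star$, use sequential compactness of $[0,1]^n$ to obtain a second cluster point, and contradict uniqueness. The extra details you add — continuity of the norm, continuity of the projections, nonemptiness of the cluster set, and the explicit contrapositive for the final clause — are all routine and in the spirit of the paper's terser version.
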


\begin{proof}
If $\mathbf v_p\nrightarrow x^\star$, there exist $\eta>0$ and infinitely many $p$ with $\|\mathbf v_p-x^\star\|\ge\eta$. By compactness of $[0,1]^n$, this subsequence has a convergent sub-subsequence with limit $y\neq x^\star$, contradicting uniqueness.
\end{proof}

\begin{proof}[Proof of Theorem~\ref{thm:stability}]
Let $y_p=f(\mathbf v_p)$ and assume $y_p\to c$. Then for every $\epsilon>0$ there exists $p_\epsilon$ such that $|f(\mathbf v_p)-c|\le\epsilon$ for all $p\ge p_\epsilon$, i.e., $\mathbf v_p\in N_\epsilon(c)$ eventually. Hence any cluster point $x$ of $\{\mathbf v_p\}$ lies in every $\overline{N_{1/m}(c)}$, and thus $\omega(\mathbf v)\subseteq C_c^\star$. If, in addition, $f$ is continuous and $\mathbf v_{p_k}\to x\in\omega(\mathbf v)$, then $f(x)=\lim_{k\to\infty}y_{p_k}=c$, so $\omega(\mathbf v)\subseteq f^{-1}(\{c\})$. For item (3), if some coordinate $\pi_i(\mathbf v_p)$ converges, we are in the first alternative of item (3). Otherwise, by Lemma~\ref{lem:unique-cluster}, the ambient cluster set contains at least two distinct points, and by the first paragraph $\omega(\mathbf v)\subseteq C_c^\star$, which is the second alternative of item (3).
\end{proof}

\paragraph{Equivalent tail characterization.}\label{para:equiv-tail}
For sequences $y_p=f(\mathbf v_p)$, $y_p\to c$ if and only if for every $\epsilon>0$ there exists $p_\epsilon$ such that $\mathbf v_p\in N_\epsilon(c)$ for all $p\ge p_\epsilon$.

\begin{proof}
If $y_p\to c$, then for every $\epsilon>0$ there exists $p_\epsilon$ with $|f(\mathbf v_p)-c|\le\epsilon$ for all $p\ge p_\epsilon$, hence $\mathbf v_p\in N_\epsilon(c)$. Conversely, if for every $\epsilon>0$ there exists $p_\epsilon$ with $\mathbf v_p\in N_\epsilon(c)$ for all $p\ge p_\epsilon$, then $|f(\mathbf v_p)-c|\le\epsilon$ for all large $p$, so $y_p\to c$.
\end{proof}

\begin{corollary}[Contrapositive Test]\label{cor:contrapositive}
If there exists $\epsilon_0>0$ such that $\mathbf v_p\notin N_{\epsilon_0}(c)$ for infinitely many $p$, then $y_p$ cannot be stable at $c$.
\end{corollary}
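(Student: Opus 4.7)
The plan is to read the corollary as the direct contrapositive of the equivalent tail characterization proved immediately above, specialized to the specific $\epsilon_0$ furnished by the hypothesis. That characterization is a biconditional between convergence $y_p\to c$ and the eventual-inclusion condition $\mathbf v_p\in N_\epsilon(c)$ for every $\epsilon>0$, so negating the right-hand side at a single value $\epsilon_0$ already suffices to negate the left-hand side. No further topology, compactness, or $\omega$-limit machinery is needed.

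Concretely, I would proceed by contradiction. First I would reconcile the two pieces of terminology in play: the definition of \emph{stable at $c$} given earlier says that for every $\epsilon>0$ there exists $p^\ast$ with $|y_p-c|\le\epsilon$ for all $p\ge p^\ast$, which is exactly $y_p\to c$. Then I would assume $y_p$ is stable at $c$ and invoke the tail characterization with the particular threshold $\epsilon=\epsilon_0$ supplied by the hypothesis, producing some $p_{\epsilon_0}$ such that $\mathbf v_p\in N_{\epsilon_0}(c)$ for all $p\ge p_{\epsilon_0}$. The hypothesis, however, asserts that the set $\{p:\mathbf v_p\notin N_{\epsilon_0}(c)\}$ is infinite; in particular it contains some index $p\ge p_{\epsilon_0}$, contradicting the inclusion just derived. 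Therefore $y_p$ cannot be stable at $c$.

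There is no real obstacle here: the corollary is a logical rephrasing of the biconditional established in the preceding paragraph. The only minor care points are (i) explicitly identifying ``stable at $c$'' with convergence to $c$ so that the tail characterization is applicable, and (ii) using that an infinite set of bad indices must intersect every tail $\{p\ge p_{\epsilon_0}\}$, which is immediate. Accordingly I would keep the written proof to two or three lines, matching the style of the proof of the tail characterization itself.
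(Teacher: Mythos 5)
Your proposal is correct and matches the paper's intent: the paper labels this result ``Contrapositive Test'' and gives no separate proof, treating it as the immediate contrapositive of the forward direction of the equivalent tail characterization, which is exactly the argument you give. Your identification of ``stable at $c$'' with $y_p\to c$ and the observation that an infinite exceptional set meets every tail are the right (and only) care points.
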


\paragraph{Sufficient tail condition.}\label{para:sufficient-tail}
For every $\epsilon>0$, if there exists $p_\epsilon$ with $\mathbf v_p\in N_\epsilon(c)$ for all $p\ge p_\epsilon$, then $y_p\to c$, and in particular $\omega(\mathbf v)\subseteq C_c^\star$. In this sense, an absorbing set is any set that the sequence eventually enters and then remains in.

\begin{proof}
The assumption implies $|f(\mathbf v_p)-c|\le\epsilon$ for all large $p$ and every $\epsilon>0$, hence $y_p\to c$. Any cluster point must lie in every $\overline{N_{1/m}(c)}$, so $\omega(\mathbf v)\subseteq C_c^\star$.
\end{proof}

\paragraph{Illustrative Examples.}
Mutual cancellation: for $f(v_1,v_2)=(v_1+v_2)/2$ with $v_1(p)=0.5+0.5\sin p$ and $v_2(p)=0.5-0.5\sin p$, each coordinate oscillates but $f\equiv 0.5$ is perfectly stable. Saturation: for $f(x)=\sigma(w^\top x+b)$ with a bounded monotone activation $\sigma$, if $w^\top x+b$ remains in a saturated regime, then $f(x)$ is nearly constant despite input jitter.

\section{Generative Dynamics and Intelligence}

\begin{definition}[Generative Operator]
A \emph{generative operator}, denoted $G$, is a function that maps a current set of entities to a new set:
\[
E_{\text{new}} = G(X_p).
\]
The system then undergoes discrete evolution according to the update rule $X_{p+1}=X_{p}\cup E_{\text{new}}$, yielding an updated pair $Z_{p+1}=(X_{p+1},S^{(p+1)})$.
\end{definition}

In physics, energy is the ability to do work. Analogously, our prior work \cite{Ng2025} gave an informal definition of intelligence: ``intelligence is the ability, given entities exemplifying a concept, to generate entities exemplifying the same concept.'' This conceptual work is translated into the physical world by the action of a generative operator. For example, a large language model that, when given the conceptual requirement ``a Python function to efficiently sort a list of numbers'', generates the correct and functional source code.

Within the Similarity Field Theory framework, this principle is expressed formally. The premise, ``given entities exemplifying a concept,'' corresponds to a system-state $Z_p$ that contains a subset of entities $X_K \subseteq X_p$, where every entity in $X_K$ belongs to a fiber $F_{\alpha}(K)$.

\textbf{Intelligence with respect to a concept $K$} is the capacity of a generative operator $G$ to produce new entities $E' \in G(X_p)$ that also belong to the fiber $F_{\alpha}(K)$ by satisfying the condition:
\[
S(E', K) \ge \alpha.
\]

\noindent\textbf{Example (ChatGPT, Ghibli style).} Let $K$ denote the concept ``Studio Ghibli--style image.'' Provide a reference context of Ghibli-style images and prompt ChatGPT to produce new frames under the same context. Using a fixed similarity function $S(\cdot,K)$ (e.g., style embeddings or calibrated ratings) to score each output, if a substantial portion of the outputs satisfy $S\ge \alpha$ (high coverage) and the average score is high (high fidelity), then ChatGPT is $\alpha$-intelligent with respect to the concept ``Ghibli-style image.''

We can therefore quantify intelligence by within-output coverage (fraction in $F_\alpha(K)$) and fidelity (mean $S(E',K)$).

\subsection{Learning}
Learning is the process by which a system's generative operator $G$ or its similarity field $S^{(p)}$ is modified over the sequence (i.e., as $p$ increases) to improve its intelligence with respect to a concept $K$. A learning rate can be conceptualized as the rate of change in the fidelity of generated entities, such as the difference quotient for $\mathbb{E}[\,S^{(p)}(G(X_p), K)\,]$ along $p$.

\subsection{Creativity as Re-Contextualization}
Furthermore, our framework provides a formal perspective on creativity. This is not treated as creation ex nihilo, but as the formation of a new concept initiated by the re-contextualization of an existing entity. This occurs when an entity $E$, already a member of a source concept $K_{\text{source}}$, is discovered to also possess high similarity to a different, nascent target concept $K_{\text{target}}$. Prior to this discovery, the fiber of the target concept, $F_{\beta}(K_{\text{target}})$, might have been empty within the system. The act of recognizing that $S(E, K_{\text{target}}) \ge \beta$ populates this fiber for the first time, establishing $K_{\text{target}}$ as a valid concept and positioning $E$ as its foundational instance. The history of WD-40 illustrates this. Its formula ($E$) was an entity in the fiber of ``aerospace anti-corrosion agent'' ($K_{\text{source}}$). The discovery of its utility in household applications was a recognition of its high similarity to the nascent concept of a ``general-purpose lubricant'' ($K_{\text{target}}$), making it the first entity to populate that conceptual fiber and create a new market. At an abstract level, creativity can also be characterized as re-contextualization: an entity in the fiber of a source concept is recognized as crossing the threshold for a target concept, thereby making the new fiber non-empty for the first time.

\section{Interpretation in Machine Learning}

Similarity Field Theory provides a new geometric lens for understanding typical machine learning models. As discussed earlier in our ``Beyond a purely statistical view'' subsection, our aim here is not to rehash frequency co-occurrence arguments but to treat networks as knowledge structures of concept fibers; we instantiate this perspective with two canonical cases: neural networks and LLMs \cite{Gardenfors2000}.

\paragraph{Neural networks as composed similarity fields}
Consider a neural network trained to classify images of a cat. Let $K_{\text{Cat}}$ be the concept ``cat''. The trained network can be viewed as approximating a similarity field
\[
S_{\text{NN}}(E_{\text{input}}, K_{\text{Cat}}) \in [0,1],
\]
where $E_{\text{input}}$ is an input entity such as a vector of pixel values.

Internal activations are typically unbounded real values. For interpretation, we fix a strictly increasing function $\varphi:\mathbb R\to[0,1]$ and define a calibrated activation $a^{\sim}=\varphi(a)\in[0,1]$.
 This allows us to read a neuron’s response as a similarity value without altering the network’s forward pass. A single neuron in the first hidden layer thus induces a primitive field
\[
S_{\text{neuron}}(E_{\text{input}}, K_{\text{feature},w}) := \varphi(a(E_{\text{input}})),
\]
where $K_{\text{feature},w}$ denotes the latent feature concept associated with the neuron’s weight vector $w$. For self-similarity we explicitly normalise $S_{\text{neuron}}(K_{\text{feature},w}, K_{\text{feature},w})=1$. In practice one can pick a prototype input $E^\star$ whose calibrated activation $\varphi(a(E^\star))$ is close to the supremum and interpret it as approximately achieving similarity~$1$.

At a suitable level of abstraction, the output of a layer (or block) can be viewed as applying a map $g_\ell:[0,1]^m\to[0,1]$ to a vector of calibrated inputs, where $g_\ell$ is coordinatewise monotone and satisfies $g_\ell(1,\dots,1)=1$ on chosen prototypes. We do not require each primitive operation inside the layer to be a similarity field by itself; rather, after calibration we interpret the layer-level map as an aggregator over upstream field values. In common architectures this aggregator is implemented by nonnegative weighted sums, convex combinations, or pointwise products of calibrated activations, which keep the range in $[0,1]$ and preserve the prototype value $1$. Iterating this construction over layers yields, at the level of layer outputs, a composed similarity field whose final readout approximates $S_{\text{NN}}(E_{\text{input}}, K_{\text{Cat}})$.

Training can then be viewed as similarity maximization: the network parameters are updated so that entities belonging to the cat fiber $F_\alpha(K_{\text{Cat}})$ receive higher values of $S_{\text{NN}}(E_{\text{input}}, K_{\text{Cat}})$, subject to task constraints. In this view, backpropagation shapes an internal similarity field rather than just fitting an input–output statistic.

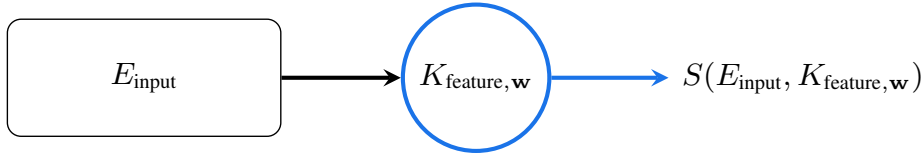
\begin{figure}[H]
\centering
\resizebox{0.9\linewidth}{!}{
\begin{tikzpicture}[>=stealth, x=1cm, y=1cm, font=\sffamily]
\definecolor{gblue}{HTML}{1A73E8}
\tikzset{ibox/.style={draw,rounded corners=4pt,minimum width=28mm,minimum height=12mm,align=center,font=\small}, concept/.style={circle,draw=gblue,very thick,minimum size=12mm,align=center,font=\small}, val/.style={font=\small}, arr/.style={->,very thick}}
\node[ibox] (input) at (-4.2,0) {$E_{\text{input}}$};
\node[concept] (neuron) at (-0.8,0) {$K_{\text{feature},\mathbf w}$};
\draw[arr,draw=gblue] (neuron.east) -- ++(1.2,0) node[val,anchor=west] {$S(E_{\text{input}},K_{\text{feature},\mathbf w})$};
\draw[arr] (input) -- (neuron);
\end{tikzpicture}
}
\caption{A single neuron as a primitive similarity field.}
\label{fig:neuron}
\end{figure}
\FloatBarrier

\begin{figure}[H]
\centering
\resizebox{0.9\linewidth}{!}{
\begin{tikzpicture}[>=stealth, x=1cm, y=1cm, font=\sffamily]
\definecolor{gblue}{HTML}{1A73E8}
\definecolor{ggreen}{HTML}{34A853}
\tikzset{ibox/.style={draw,rounded corners=4pt,minimum width=28mm,minimum height=12mm,align=center,font=\small}, feat/.style={circle,draw=gblue,very thick,minimum size=11mm,align=center,font=\small}, concept/.style={circle,draw=ggreen,very thick,minimum size=12mm,align=center,font=\small}, val/.style={font=\small}, arr/.style={->,very thick}}
\node[ibox] (input) at (-4.2,0) {$E_{\text{input}}$};
\node[feat] (l11) at (-1.4,2.0) {$K_{L1,1}$};
\node[feat] (l12) at (-1.4,0) {$K_{L1,2}$};
\node[feat] (l13) at (-1.4,-2.0) {$K_{L1,3}$};
\node[feat] (l21) at (1.4,2.0) {$K_{L2,1}$};
\node[feat] (l22) at (1.4,0) {$K_{L2,2}$};
\node[feat] (l23) at (1.4,-2.0) {$K_{L2,3}$};
\node[concept] (cat) at (3.8,0) {$K_{\text{Cat}}$};
\draw[arr] (input) -- (l11);
\draw[arr] (input) -- (l12);
\draw[arr] (input) -- (l13);
\draw[arr,draw=gblue] (l11) -- (l21);
\draw[arr,draw=gblue] (l11) -- (l22);
\draw[arr,draw=gblue] (l11) -- (l23);
\draw[arr,draw=gblue] (l12) -- (l21);
\draw[arr,draw=gblue] (l12) -- (l22);
\draw[arr,draw=gblue] (l12) -- (l23);
\draw[arr,draw=gblue] (l13) -- (l21);
\draw[arr,draw=gblue] (l13) -- (l22);
\draw[arr,draw=gblue] (l13) -- (l23);
\draw[arr,draw=ggreen] (l21) -- (cat);
\draw[arr,draw=ggreen] (l22) -- (cat);
\draw[arr,draw=ggreen] (l23) -- (cat);
\draw[arr,draw=ggreen] (cat.east) -- ++(1.1,0) node[val,anchor=west] {$S_{\text{NN}}(E_{\text{input}},K_{\text{Cat}})$};
\end{tikzpicture}
}
\caption{A neural network composes simple similarity fields into a complex one.}
\label{fig:network}
\end{figure}
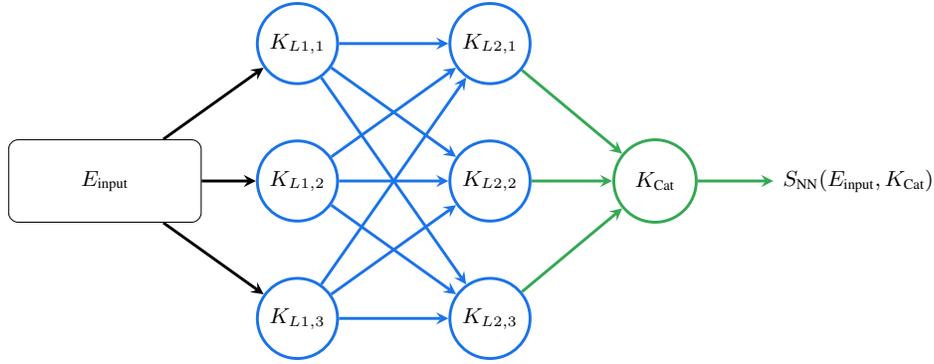
\FloatBarrier

\paragraph{Language models as token-level similarity generators}
At the language interface, each vocabulary token can be treated as a concept $K_{\text{token}}\in U$, and an input prompt as a finite sequence of such token-concepts whose composition induces a higher-order concept $K_{\text{prompt}}$. The trained language model itself is an entity $K_{\text{Language}}$ that represents a vast meta-concept of language and, simultaneously, a generative operator $G$. For a given token concept $K_{\text{token}}$, its fiber $F_\alpha(K_{\text{token}})$ collects entities in $U$ that align with the token’s usage in human language and culture at threshold $\alpha$. When $G$ acts on a prompt, the prediction head compares the synthesized prompt concept $K_{\text{prompt}}$ against every token concept in its vocabulary; the resulting normalized probability distribution over tokens can be interpreted, after calibration, as a monotone surrogate of the similarity values $S(E,K_{\text{prompt}})$. Selecting a high-probability next token $E'_{\text{next}}$ therefore approximately enforces $S(E'_{\text{next}},K_{\text{prompt}})\ge\alpha$ for some effective threshold $\alpha$, so that the model behaves as a generative operator that tends to remain within the fiber of the prompt concept under the Similarity Field Theory view.

\paragraph{A path towards interpretability}
In the SFT view, a trained network is a composition of calibrated similarity fields and can be analysed in terms of the concepts attached to its units. Each neuron computes a value $S_{\text{neuron}}(E_{\text{input}},K_{\text{latent}})$, which we interpret as the similarity between an input entity and a latent concept $K_{\text{latent}}$. In an idealised limit, the concept represented by that neuron is fully captured by its core fiber: the set of all inputs that drive its calibrated activation to the maximum value $1$,
\[
F_1(K_{\text{latent}}) = \{\,E_{\text{input}} \in U \mid S_{\text{neuron}}(E_{\text{input}},K_{\text{latent}})=1\,\}.
\]
Recovering, even approximately, the inputs in this fiber amounts to recovering what the neuron “means”. From this perspective, interpretability becomes a geometric problem: decomposing the network into its constituent conceptual fibers and understanding how these fibers are composed across layers.

\paragraph{As Similarity Inference}

This interpretation suggests a deeper principle. The entities we perceive may, on this view, be understood as stable structures inferred from a dynamic field of perceived similarities. We need not assume that an entity such as a ``cat'' is given to cognition in wholly direct access; rather, cognition may operate on coherent patterns of similarity in sensory input. Even the perception of a quantity such as length can be treated as an act of relational inference. In this sense, a framework centered on similarity is not only useful for machine learning, but may also provide a suggestive model for aspects of cognition.

\subsection{Mediated Concepts and the Limits of Alignment}
A further implication of the present framework is a possible limit of alignment by description alone. If the world available to cognition is mediated through perspective, similarity relations, and stable regularities, then what models learn from human data is not the concept in itself, but an operationalized form of the concept as mediated by human observation, judgment, and description.

This is especially consequential for safety. AI systems may be aligned less to safety as such than to human understandings, descriptions, and operational criteria of safety. What humans identify, describe, or institutionalize as ``safe'' need not fully determine the underlying safety concept. Contemporary AI systems are trained on these human-interpretable and human-labelable traces of safety---such as labels, judgments, policies, and linguistic usage---rather than on safety as such. This introduces a structural risk: a system may become increasingly accurate at satisfying human-recognizable safety criteria without correspondingly becoming more closely aligned with the underlying safety concept.

\paragraph{Claim (Pseudo-Alignment).}
Improved alignment to human-recognizable safety criteria does not by itself imply improved alignment to the underlying safety concept.

The challenge of safety alignment may therefore arise not only from limited data or computation, but also from the possibility that available descriptions of safety do not uniquely determine the target concept itself. If so, then even a model that satisfies the safety descriptions currently available to humans may remain only partially aligned with the underlying safety concept. Safety alignment is one especially consequential instance of a more general point: models are trained not on concepts in themselves, but on mediated traces of how such concepts are observed, judged, and operationalized.

\subsection*{Relations to other formalisms}

\paragraph{Similarity as relational inference}
SFT frames perception and judgement primarily as relational inference on a dynamic similarity field, without presupposing direct access to objects. In this view, stable entities appear as persistent superlevel-set structures, i.e., regularities in similarity relations that remain coherent under small perturbations. This perspective complements rather than replaces raw frequency-based descriptions by providing a geometric language for the structure that underwrites stable readouts.

\paragraph{Analogical reasoning} Analogical judgments can be expressed as structure-preserving maps between fibers. Given concepts $K_{\text{src}}$ and $K_{\text{tgt}}$, an analogy aligns subsets where $S(\cdot,K_{\text{src}})$ and $S(\cdot,K_{\text{tgt}})$ share level-set structure. This fiber-consistent view turns analogical correspondences into constraints on similarity geometry and permits quantitative tests by checking whether mapped instances remain within target fibers.

\paragraph{Symbolic constraints and fuzzy decisions}
Symbolic constraints can be cast as predicates on fibers. A discrete constraint corresponds to membership in an intersection of fibers at prescribed thresholds, while graded decisions correspond to $\alpha$-level choices on a single fiber. This recovers classical fuzzy $\alpha$-cuts within SFT, with similarity $S(\cdot,K)$ serving as the membership function and $F_{\alpha}(K)$ as its superlevel set. In this way, crisp logic, soft decisions, and their mixtures can be expressed within the same similarity-field framework, linking rule-like reasoning with graded conceptual membership \cite{Zadeh1965}.

\paragraph{Neural components under calibrated similarity}
With calibrated readouts, neural networks can be viewed as hierarchical compositions of primitive similarity fields, where intra- and inter-layer aggregations act as approximately monotone operators. Under this lens, mechanistic analyses that recover circuits, features, or probes correspond to identifying latent concepts $K$ and characterizing their fibers. Stable predictions arise when the determining vectors anchor or become confined to level sets of a readout, consistent with the Stability Theorem. This perspective places structural interpretability and readout stability within a single geometric language.

\section{Conclusion}

Similarity Field Theory treats dynamic similarity as the primitive for cognition and intelligence. By relaxing the strict axioms of classical mathematics and formalizing entities, fibers (as superlevel sets), and generative operators, we provide a precise language to describe the evolution of conceptual structures via sequences. Within this language, we proved two foundational principles: the Incompatibility Theorem, which reveals the necessary consequences of asymmetry, and the Stability Theorem, which establishes the causal dependency of stable outcomes on stable foundations through anchors or eventual confinement to a level set. The framework culminates in a generative definition of intelligence as the preservation of these structures—an operational principle that is both generalizable and measurable. Networks can be analyzed as compositions of similarity fields, enabling a principled route to interpretability. In short, this framework shifts the center of gravity from statistics to the geometry of similarity: intelligence as generative fiber preservation, and stability as eventual confinement within level sets. Ultimately, this work points to a shift in perspective—from objects to similarity relations—offering a computable language for modeling cognitive structure. It also suggests a possible limit of alignment by description alone: systems trained on human-mediated traces of concepts may align to human-recognizable criteria without thereby fully aligning to the underlying concepts themselves. This may be especially consequential in safety-critical settings, where improved conformity to human-recognizable safety standards need not by itself imply improved alignment to safety as such.

\end{document}